\titlespacing\section{0pt}{12pt plus 3pt minus 3pt}{1pt plus 1pt minus 1pt}
\titlespacing\subsection{0pt}{10pt plus 3pt minus 3pt}{1pt plus 1pt minus 1pt}
\titlespacing\subsubsection{0pt}{8pt plus 3pt minus 3pt}{1pt plus 1pt minus 1pt}
\definecolor{lime}{HTML}{A6CE39}
\def\TA{\mathbf A}
\def\bb{\mathbf b}
\def\bd{\mathbf d}
\def\TE{\mathbf E}
\def\be{\mathbf e}
\def\TI{\mathbf I}
\def\by{\mathbf y}
\def\TW{\mathbf W}
\def\bv{\mathbf v}
\def\bx{\mathbf x}
\def\by{\mathbf y}
\def\Id{\operatorname{Id}}
\newcommand{\grad}{\ensuremath{\nabla}}
\theoremstyle{plain}
\newtheorem{theorem}{Theorem}[section]
\newtheorem{proposition}[theorem]{Proposition}
\theoremstyle{definition}
\theoremstyle{remark}
\title{CQnet: convex-geometric interpretation and constraining neural-network trajectories}
\author[1\thanks{\tt{bas@compgeoinc.com}}]{Bas Peters}
\affil[1]{Computational Geosciences Inc.}
\begin{document}

\twocolumn[ 
  \begin{@twocolumnfalse} 

\maketitle

\begin{abstract}
We introduce CQnet, a neural network with origins in the CQ algorithm for solving convex split-feasibility problems and forward-backward splitting. CQnet's trajectories are interpretable as particles that are tracking a changing constraint set via its point-to-set distance function while being elements of another constraint set at every layer. More than just a convex-geometric interpretation, CQnet accommodates learned and deterministic constraints that may be sample or data-specific and are satisfied by every layer and the output. Furthermore, the states in CQnet progress toward another constraint set at every layer. We provide proof of stability/nonexpansiveness with minimal assumptions. The combination of constraint handling and stability put forward CQnet as a candidate for various tasks where prior knowledge exists on the network states or output.
\end{abstract}
\vspace{0.35cm}

  \end{@twocolumnfalse} 
] 

\section{Introduction}
The ubiquitous success of neural networks in fields like computer vision, science and engineering, and control automatically raises the desire for more intuition, control, and interpretability of its operation, as well as provable properties of the networks. 

Regarding the intuition of how a trained neural network operates, classic and newer analyses for classification problems show how network layers add new decision boundaries, as viewed in terms of the input data space \citep{NIPS1987_4e732ced,266507,van2016some,NEURIPS2019_0801b20e}. A different type of intuition of how data propagates (flows) through a network (its trajectories), is provided by neural ordinary (and partial) differential equations (Neural ODEs) where the underlying continuous-time ODE, together with a discretization, prescribe (in)stability, energy conservation, invertibility similar to physical phenomena \citep{HaberRuthotto2017a,weinan2017proposal,Chang2017Reversible,pmlr-v80-lu18d,neuroODE,RuthottoHaber2018}. 

Here, we propose a new network design, \emph{CQnet}, based on the CQ algorithm \cite{Byrne_2002} for the convex split-feasibility problem (SFP). The proposed network offers new insights into the operation of a network, as well as some provable properties. Specifically we
\begin{itemize}
\item propose CQnet and illustrate its interpretation as trajectories that are seeking feasibility with respect to a layer-dependent convex set while being elements of some other constraint sets.

\item illustrate that CQnet's most basic form naturally includes two types of (possibly data-dependent or sample-specific) constraints on the trajectories: 1) constraints that are satisfied at every layer and the output; 2) constraint sets towards which the trajectories make progress during forward propagation. CQnet can also operate with a mix of deterministic and learned constraint sets.

\item require only minimal assumptions to prove the stability of CQnet in a nonexpansive sense by re-purposing ingredients from the convergence proofs of the CQ algorithm.

\end{itemize}

Besides interpreting CQnet's inner workings, there are a few other ways to relate CQnet to existing works. Related work includes the observation that neural networks can be built from nonexpansive and averaged operators \citep{combettes2020deep,doi:10.1137/19M1272780,hasannasab2020parseval}, who focus on layered networks with activation + affine structure. \cite{9054731} add constraints on the network weights during the training to ensure nonexpansiveness of standard non-linear+affine composite layers, and \cite{NEURIPS2020_798d1c28,heaton2021feasibility} construct implicit fixed point models. In this work, we prove the nonexpansiveness/stability of CQnet, or robustness to perturbations. While we also provide a layer-wise condition on the network weights, which is cheap to compute and easy to enforce, estimating entire network Lipschitz constants (see, e.g., \citep{SzegedyZSBEGF13,NEURIPS2019_95e1533e,NEURIPS2018_48584348,doi:10.1137/19M1272780,shi2022efficiently}) is not the scope of this work. 

Another line of work concerns neural networks with various types of state constraints. Satisfying constraints on the output of a network is possible either via the projection of the output of the last layer \citep{dalal2018safe}, or by training the network such that the output satisfies certain properties, for instance via alternating optimization schemes \cite{NIPS2019_9385,Pathak_2015_ICCV,10.1109/ICCV.2015.203,herrmann2019learned}, penalties added to the loss \citep{KERVADEC201988,kervadec2020bounding}, Lagrange multipliers \citep{NIPS1987_a87ff679,marquez2017imposing}, or by training the network via a feasibility problem \cite{stewart2017label,peters2022point}. Optimizing a network to satisfy constraints does not provide strict guarantees that validation samples will satisfy those constraints, particularly for very small datasets. For this reason, \cite{Brosowsky_Keck_Dunkel_Zollner_2021} propose a method for sample-specific constraints, and \cite{boesen2022neural} present an approach for constraining trajectories and outputs based on differential-algebraic equations.

Furthermore, while the proposed CQnet is part of the family of deep unrolled existing algorithms, e.g., \citep{lsp,8683124,9363511,Bertocchi_2020}, the overall goal is \emph{not} unrolling yet-another-algorithm to hopefully obtain (minor) computational advantages or image reconstruction improvements. Instead, we focus on the interpretation, the multiple ways to employ CQnet for constraining trajectories, and a few provable properties.

The paper proceeds by reviewing the CQ algorithm and a version of its convergence proof. This sets the stage to introduce the main contribution: CQnet and a corresponding proof of stability, as well as some variations and connections to other neural network types. Examples in small and large data classification and optimal control illustrate a couple of ways how CQnet can be employed while highlighting a combination of capabilities that most other networks cannot offer.

\section{Preliminaries and the CQ algorithm for convex split-feasibility}
A subset of convex optimization problems may be conveniently formulated as the convex split-feasibility problem (SFP) \cite{1084541,censor1994multiprojection}
\begin{equation}\label{cvx_feas}
\text{find} \:\: \bx \: \text{s.t.} \: \TA \bx \in Q \: \text{and} \: \bx \in C,
\end{equation}
where $\bx \in \mathbb{R}^n$ are the optimization variables, $\TA \in \mathbb{R}^{m \times n}$ is a matrix, and $Q$ and $C$ are closed and convex sets. 

While there are many ways to solve the SFP, the CQ algorithm \cite{Byrne_2002} offers an approach that avoids matrix inverses of $\TA$ and breaks down potentially difficult-to-compute projections into `simple' projections and matrix-vector products. The method minimizes over the set $C$ the objective function
\begin{equation}\label{CQ_obj}
f(\bx) =  \min_{\bx \in C} \frac{1}{2} \| P_Q(\TA \bx) - \TA \bx \|_2^2,
\end{equation}
where $P_Q $ is the Euclidean projection from $\mathbb{R}^n$ onto the closed and convex set $Q$, i.e.,
\begin{equation}\label{proj}
P_Q(\bx) = \arg\min_\by \frac{1}{2} \| \by - \bx \|_2^2 \:\: \text{s.t.} \:\: \by \in Q.
\end{equation}
The objective $f(\bx)$ is the squared Euclidian distance from $\bx$ to the constraint set $\{ \bx \: | \: \TA \bx \in Q \}$ restricted to $C$. The continuously differentiable (squared) distance function $d^2(\mathbf{y}): \mathbb{R}^n \rightarrow \mathbb{R}$
\begin{equation}\label{distsq}
d^2(\mathbf{y}) =  \frac{1}{2} \|  P_Q(\TA \by) - \TA \by \|_2^2,
\end{equation}
has a closed-form gradient \citep[Ex. 3.3]{hiriart2012fundamentals}
\begin{equation}\label{distsq_grad}
\grad_\by d^2(\mathbf{y}) =  \TA^\top (\Id - P_Q)\TA \by,
\end{equation}
with $\Id$ as the identity operator. Lipschitz continuity with parameter $\lambda$ for any $\bx \in \mathbb{R}^n$ and $\by \in \mathbb{R}^n$ implies
\begin{equation}\label{Lip}
\| f(\bx) - f(\by) \|_2 \leq \lambda \| \bx - \by \|_2,
\end{equation}
and is called nonexpansive if $\lambda \leq 1$, contractive if $\lambda < 1$. For the squared distance function \eqref{distsq} including linear operators $\TA$, we have $\lambda = \rho(\TA^\top \TA)$ as the largest eigenvalue \citep[Lemma 8.1]{CharlesByrne2004}.

An operator $T$ is $\alpha$-averaged if it can be written as the sum of the identity operator $\Id$ and a nonexpansive operator $V$,
\begin{equation}\label{av_op}
T = \alpha \Id + (1-\alpha) V,
\end{equation}
for $\alpha \in (0,1)$. The CQ algorithm finds a solution to \eqref{cvx_feas} by performing projected-gradient descent with stepsize $\alpha \in (0, 2 / \lambda)$ on the distance function, resulting in the iteration
\begin{equation}\label{CQ_iter}
\bx_{k+1}  = P_ C \big( \bx_k - \alpha \TA^\top (\Id - P_Q) \TA \bx_k  \big),
\end{equation}
This shows that $\bx_k$ is always feasible with respect to $C$, and descends towards the constraint set $\{ \bx \: | \: \TA \bx \in Q\}$.

The CQ algorithm found uses in many applications, including radiation therapy treatment planning \cite{Censor_2006, https://doi.org/10.1111/itor.12929}, compressed sensing \citep{Lopez_2012,gibali2018note}, Gene regulatory network inference \cite{Wang_2017}, image deblurring \citep{shehu2021new}, and adding constraints to the output of neural networks \cite{peters2022point}. 

The convergence of the CQ algorithm (e.g., \cite{CharlesByrne2004}) and its connection to forward-backward splitting \citep{comb_PFwdBwdS} is the cornerstone of several properties of the neural network proposed in this work. Here, we state a version of the proof so that elements will be reused in our theorem in a later section.

\begin{theorem}[Convergence of the CQ iteration]\label{thm:cq_converg}
If $\mathcal{X} = C \bigcap \{ \bx \: | \: \TA \bx \in Q \} \neq \emptyset$, let $\lambda$ be the largest eigenvalue of $\TA^\top \TA$, and $\alpha \in (0, 2 / \lambda)$, then from any initial guess the iteration \eqref{CQ_iter} converges to a solution in the sense that at iteration $k$ we have $\operatorname{inf}_{\by \in \mathcal{X}} \| \by -  \bx_k \|_2 \rightarrow 0$ monotonically non-increasing.
\end{theorem}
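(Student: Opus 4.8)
The plan is to read the iteration \eqref{CQ_iter} as a fixed-point iteration $\bx_{k+1} = T \bx_k$ from an arbitrary $\bx_0$, where $T = P_C \circ G_\alpha$ and $G_\alpha = \Id - \alpha \TA^\top(\Id - P_Q)\TA = \Id - \alpha \grad d^2$ is the gradient-descent step on the distance function \eqref{distsq}. The first step is to identify the solution set with the fixed-point set $\operatorname{Fix}(T)$. If $\by \in \mathcal{X}$, then $\TA\by \in Q$ forces $P_Q(\TA\by) = \TA\by$, so $\grad d^2(\by) = \TA^\top(\Id - P_Q)\TA\by = 0$ and hence $G_\alpha\by = \by$; since also $\by \in C$ we have $P_C\by = \by$, giving $T\by = \by$. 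Conversely, a fixed point of $T$ is a minimizer of $f$ over $C$, and because $\mathcal{X}\neq\emptyset$ the minimum value of \eqref{CQ_obj} is zero and attained exactly on $\mathcal{X}$; thus $\operatorname{Fix}(T)=\mathcal{X}$.

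The heart of the argument is to show that $T$ is averaged in the sense of \eqref{av_op}. The function $d^2$ is convex and continuously differentiable, and by \citep[Lemma 8.1]{CharlesByrne2004} its gradient \eqref{distsq_grad} is $\lambda$-Lipschitz with $\lambda = \rho(\TA^\top\TA)$. By the Baillon--Haddad theorem the gradient of such a function is $(1/\lambda)$-cocoercive, and a standard computation then shows that for the stepsize window $\alpha \in (0, 2/\lambda)$ the operator $G_\alpha$ is averaged. Since $P_C$ is firmly nonexpansive, hence averaged, and a composition of averaged operators is again averaged, $T$ is averaged; in particular $T$ is nonexpansive.

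With averagedness in hand I would finish in two steps. For the monotonicity claim, fix any $\by\in\mathcal{X}=\operatorname{Fix}(T)$; nonexpansiveness gives $\|\by - \bx_{k+1}\|_2 = \|T\by - T\bx_k\|_2 \le \|\by - \bx_k\|_2$, and taking the infimum over $\by$ on both sides shows $\inf_{\by\in\mathcal{X}}\|\by - \bx_k\|_2$ is monotonically non-increasing (Fej\'er monotonicity). For convergence to zero, the defining averaged-operator identity yields an inequality of the form $\|T\bx - \by\|_2^2 \le \|\bx - \by\|_2^2 - c\,\|\bx - T\bx\|_2^2$ with a constant $c>0$; summing over $k$ telescopes to $\sum_k \|\bx_{k+1} - \bx_k\|_2^2 < \infty$, so the residual $\|\bx_k - T\bx_k\|_2 \to 0$. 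The sequence $\{\bx_k\}$ is bounded by Fej\'er monotonicity, so in $\mathbb{R}^n$ it has cluster points, and continuity of $T$ together with the vanishing residual forces every cluster point into $\operatorname{Fix}(T)=\mathcal{X}$. A Fej\'er-monotone sequence possessing a cluster point in the target set converges to that point, which gives $\inf_{\by\in\mathcal{X}}\|\by - \bx_k\|_2 \to 0$.

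The main obstacle is the averagedness of the gradient step $G_\alpha$: producing the cocoercivity estimate and converting it into averagedness precisely over the stepsize range $\alpha\in(0,2/\lambda)$ is the one genuinely technical ingredient, while the Fej\'er-monotonicity and cluster-point bookkeeping are routine in finite dimensions. An alternative that sidesteps invoking Baillon--Haddad is to exploit that $\Id - P_Q$ is firmly nonexpansive and to propagate the averagedness through the $\TA^\top(\cdot)\TA$ sandwich directly, but the cocoercivity route keeps the dependence on $\lambda = \rho(\TA^\top\TA)$, and hence the admissible stepsize window, most transparent.
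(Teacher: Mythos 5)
Your proposal is correct, and its skeleton is the same as the paper's: write the iteration as $\bx_{k+1}=T\bx_k$ with $T=P_C(\Id-\alpha B)$, $B=\grad d^2$, show $T$ is averaged as a composition of averaged operators, and conclude convergence. The differences lie in what you prove versus what the paper cites, and they work in your favor. First, you identify $\operatorname{Fix}(T)=\mathcal{X}$; the paper never does this, yet it is needed to pass from ``the iterates converge to a fixed point of $T$'' (which is what Krasnosel'ski\u{\i}--Mann gives) to the stated claim about $\inf_{\by\in\mathcal{X}}\|\by-\bx_k\|_2$. Second, you invoke Baillon--Haddad to upgrade $\lambda$-Lipschitz continuity of $B$ to $(1/\lambda)$-cocoercivity before concluding that $\Id-\alpha B$ is averaged. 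This step is not cosmetic: Lipschitz continuity alone does not make $\Id-\alpha B$ averaged or even nonexpansive (take $B=-\lambda\Id$), so the convexity of $d^2$ must enter somewhere; the paper's prose elides this and leans entirely on the citation to the Ryu--Boyd primer. Third, you replace the appeal to Krasnosel'ski\u{\i}--Mann by the explicit Fej\'er-monotonicity bookkeeping (the averagedness inequality, summable residuals, cluster points, continuity of $T$), which is exactly what delivers the monotone non-increase and the convergence to zero that the theorem literally asserts. What the paper's route buys is brevity; what yours buys is a self-contained proof of the precise statement, with the role of convexity made visible.
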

\begin{proof}
Rewrite the CQ iteration \eqref{CQ_iter} as the fixed point iteration $\bx_{k+1}  =\mathcal{P}_C \big(  \bx_k - \alpha  \TA^\top (I -P_Q) \TA \bx_k ) \leftrightarrow \bx_{k+1}  = T(\bx_k)$. According to Krasnosel'ski\u{\i}\textendash{}Mann \citep{krasnosel1955two,mann1953mean} (see also, e.g., \citep[Thm. 1]{ryuyin2022}), this iteration converges to a fixed point of $T$ if it is an averaged operator.

To show this, write as a composition $T=P_C (I - \alpha B)$, where the projection operator $P_C$ is nonexpansive and averaged \citep[Sect. 3.1]{ryu2016primer}, and $B=\grad_\by d^2(\mathbf{y})$ as in \eqref{distsq_grad} is $\lambda$-Lipschitz continuous. Therefore, $ (I - \alpha B)$ \citep[Sect. 5]{ryu2016primer} is averaged for $\alpha \in (0, 2 / \lambda)$. The composition $T$ is thus averaged, see, e.g., \citep[Thm. 27]{ryuyin2022}.
\end{proof}

\section{CQnet}
The goal is to construct a neural network that offers a clear interpretation of its inner workings in terms of convex geometry. Its foundation in constrained optimization equips the proposed network with constraints on the state trajectories and output.

We modify the CQ algorithm into a neural network by making the matrix $\TA$ iteration dependent ($\TA_k$) and learnable; network states $\bx_k$ are not learnable; initial guess $\bx_1$ corresponds to the input data $\bd$. Each CQ iteration is equivalent to a layer in the neural network. The last important item is that the projection operators $P_{C_k}$ and $P_{Q_k}$ are now layer dependent (not learnable in this work) and function similarly to an activation function. For instance, the widely used ReLU activation $\text{ReLU}(\bx) = \max(\bx, 0)$ is equivalent to the projection onto the set of elementwise bound constraints $\{ \bx \: | \: \bx \geq 0 \}$. The CQnet with $f$ layers thus takes the form
\begin{align}\label{CQnet}
&\bx_{k+1}  =P_{C_k} \big( \bx_k - \alpha  \TA_k^\top (\Id - P_{Q_k}) \TA_k \bx_k \big) \\
&\text{for} \: k=1,2,\cdots,f \quad \text{and} \quad \bx_1 = \bd \nonumber
\end{align}
At every layer, CQnet's states $\bx_k$ take a step toward the constraint set $\{ \bx \: | \: \TA_k \bx \in Q_k \}$, which changes at every iteration. In other words, the network states follow the learnable constraint set and stop moving whenever they become elements of the set. Furthermore, the states are always feasible with respect to the set $C_k$. Note that if we would allow for a slight generalization of the CQ algorithm and look at proximal maps instead of just projections, a larger number of common activation functions \citep[sect. 2.1]{combettes2020deep} fit in the proposed framework.

\subsection{Stability}
To prove the stability of CQnet, we borrow from the convergence of the CQ algorithm (Theorem \ref{thm:cq_converg}).

\begin{theorem}[Stability of the forward propagation of CQnet \eqref{CQnet}] \label{CQnet_stab}
Let $g(\{\alpha_k\},\{\TA_k\},\bd)$ describe the forward propagation of data $\bd$ through CQnet \eqref{CQnet} with learned weights $\{\TA_k\}$ for $i=1,2,\cdots,f$ layers with stepsizes $\alpha_k \in (0, 2 / \lambda_k ] \:\: \forall k$ and $\lambda_k = \rho(\TA_k^\top \TA_k)$ is the largest eigenvalue of $\TA_k^\top \TA_k$. Then CQnet is stable in the nonexpansive sense $\| g(\{\alpha_k\},\{\TA_k\},\bd_1) - g(\{\alpha_k\},\{\TA_k\},\bd_2) \| \leq \| \bd_1 - \bd_2 \|$.
\end{theorem}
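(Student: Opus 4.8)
The plan is to recognize the forward map $g(\{\alpha_k\},\{\TA_k\},\bd) = (T_f \circ T_{f-1} \circ \cdots \circ T_1)(\bd)$ as a composition of the per-layer operators $T_k(\bx) = P_{C_k}\big(\bx - \alpha_k \TA_k^\top(\Id - P_{Q_k})\TA_k \bx\big)$, and then to exploit the fact that nonexpansiveness is preserved under composition. Hence it suffices to establish that each individual layer map $T_k$ is nonexpansive; the global bound $\| g(\{\alpha_k\},\{\TA_k\},\bd_1) - g(\{\alpha_k\},\{\TA_k\},\bd_2) \| \leq \| \bd_1 - \bd_2 \|$ then follows immediately by chaining the per-layer inequalities from the first layer, whose input is $\bd$, through to the output.

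To show each $T_k$ is nonexpansive, I would reuse the decomposition from the proof of Theorem \ref{thm:cq_converg} and write $T_k = P_{C_k}(\Id - \alpha_k B_k)$ with $B_k = \TA_k^\top(\Id - P_{Q_k})\TA_k$, i.e.\ the layer-$k$ instance of the gradient \eqref{distsq_grad}. The outer operator $P_{C_k}$ is a projection onto a closed convex set and is therefore nonexpansive. For the inner operator, the essential observation is that $B_k$ is the gradient of the convex, $\lambda_k$-smooth squared distance function \eqref{distsq}, so by the Baillon--Haddad theorem it is $\tfrac{1}{\lambda_k}$-cocoercive; consequently $(\Id - \alpha_k B_k)$ is nonexpansive for every stepsize $\alpha_k \in (0, 2/\lambda_k]$. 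Composing these two nonexpansive maps yields a nonexpansive $T_k$, which completes the per-layer claim.

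The main point requiring care — and the reason the hypothesis here uses the closed interval $\alpha_k \in (0, 2/\lambda_k]$ rather than the open interval of Theorem \ref{thm:cq_converg} — is the boundary stepsize $\alpha_k = 2/\lambda_k$. At that value $(\Id - \alpha_k B_k)$ sits exactly on the boundary of the averaged class: it remains nonexpansive but need not be averaged, so I cannot simply quote the averagedness conclusion that underpinned the convergence argument. I would instead invoke cocoercivity directly, which delivers nonexpansiveness on the entire closed interval. Because stability only demands nonexpansiveness — not averagedness, and not a strict contraction factor — losing averagedness at the endpoint is harmless. Finally, the layer-dependence of $\alpha_k$, $\TA_k$, $C_k$, and $Q_k$ poses no difficulty, since each factor is analyzed independently and the per-layer constants $\lambda_k$ enter only through the corresponding stepsize constraint before the maps are composed.
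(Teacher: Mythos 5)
Your proposal is correct, and it follows the same two-level architecture as the paper's proof: write $g$ as the composition $T_f \cdots T_2 T_1$, split each layer as $T_k = P_{C_k}(\Id - \alpha_k B_k)$ with $B_k$ the layer-$k$ instance of the gradient \eqref{distsq_grad}, show each factor is nonexpansive, and conclude by induction on the composition. The one place where you genuinely diverge is the justification that $\Id - \alpha_k B_k$ is nonexpansive up to and including $\alpha_k = 2/\lambda_k$. The paper argues this from $\lambda_k$-Lipschitz continuity of $B_k$ alone, asserting that $\Id - \alpha_k B_k$ then has Lipschitz constant $\operatorname{max}(1, |1-\alpha_k\lambda_k|)$; read literally, that implication is false for a general $\lambda$-Lipschitz map (take $B = -\lambda \Id$, for which $\Id - \alpha B = (1+\alpha\lambda)\Id$ is expansive), so the paper is implicitly using the additional fact that $B_k$ is the gradient of a \emph{convex} function. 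Your Baillon--Haddad route supplies exactly this ingredient explicitly: since $B_k = \grad_\by d^2(\by,\TA_k)$ with $d^2$ convex and $\lambda_k$-smooth, $B_k$ is $\tfrac{1}{\lambda_k}$-cocoercive, and the standard expansion
\begin{equation*}
\|(\Id - \alpha_k B_k)\bx - (\Id - \alpha_k B_k)\by\|_2^2 \leq \|\bx - \by\|_2^2 - \alpha_k\Bigl(\tfrac{2}{\lambda_k} - \alpha_k\Bigr)\|B_k\bx - B_k\by\|_2^2
\end{equation*}
gives nonexpansiveness on the whole closed interval $(0, 2/\lambda_k]$. You are also right that averagedness is lost at the endpoint and is not needed for the stability claim. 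In short: same proof skeleton, but your treatment of the inner operator is the more rigorous one, and it is the correct way to justify the closed stepsize interval appearing in the theorem statement.
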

\begin{proof}
Write a single CQnet layer $\bx_{k+1}  = P_{C_k} \big(  \bx_k - \alpha_k  \TA_k^\top (\Id -P_{Q_k})(\TA_k \bx_k) ) \leftrightarrow \bx_{k+1}  = T(\bx_k)$ as a fixed-point iteration. Forward propagation then amounts to a composition of $f$ operators $T_f \cdots T_2 T_1 \bd_1 = g(\{\alpha_k\},\{\TA_k\},\bd_1)$.

Compose each operator as $T_k = P_{C_k} (\Id - \alpha_k B_k)$, where the projection operator $P_{C_k}$ is nonexpansive (and averaged) (e.g., \citep[Sect. 3.1]{ryu2016primer}) and $B_k=\grad_\by d^2(\mathbf{y},\TA_k)$ as in \eqref{distsq_grad} is $\lambda_k$-Lipschitz continuous with $\lambda_k = \rho(\TA_k^\top \TA_k)$ as the largest eigenvalue \citep[Lemma 8.1]{CharlesByrne2004}. Therefore, $ (\Id - \alpha_k B_k)$ has the Lipschitz constant $\operatorname{max}(1,|1-\alpha_k \lambda_k |)$. Nonexpansiveness of $(\Id - \alpha_k B_k)$ follows directly for $\alpha_k \in (0, 2 / \lambda_k ]$ .

The product of the two nonexpansive operators $T_k = P_{C_k} (\Id- \alpha_k B_k)$ is also nonexpansive. By induction, the product of $f$ nonexpansive operators is again nonexpansive and we obtain $\| T_f \cdots T_2 T_1 \bd_1  -T_f \cdots T_2 T_1  \bd_2 \|_2 \leq  \| \bd_1 - \bd_2 \|$.
\end{proof}

The above proof is quite flexible in the sense that there are no assumptions like constant or `slowly' varying network weights, as sometimes found in continuous-time ODE-based stability proofs. The proof also includes state/layer normalization, as will be shown in section \ref{sect:someconstraints} below. Theorem \ref{CQnet_stab} also allows for different projections/activations per layer. In the next section, we take a closer look at the relation between $\alpha_k$ and $\TA_k$.

\subsection{Practical certificate of nonexpansiveness}
Theorem \ref{CQnet_stab} ensures stability (nonexpansiveness) of the network for sufficiently small $\alpha_k$, given $\TA_k$. In practice, however, we may desire guarantees that the network is nonexpansive during and after training, for a fixed $\alpha_k$.

Our starting point is that we need an upper bound on the largest eigenvalue $\lambda_k = \rho(\TA_k^\top \TA_k)$. For small matrices $\TA_k$ we can directly compute the largest eigenvalue. The more challenging case is when convolutional kernels parameterize $\TA$ with multiple input and output channels. \cite{sedghi2018the} present a method to compute the eigenvalues in this case, based on FFTs and SVDs, and its computational cost depends on $n^2$, making the computation expensive for large-scale inputs. To proceed with deriving an upper bound on the eigenvalues that is in closed form, non-asymptotic, and independent of $n$, we leverage work on stepsize selection for the CQ algorithm, see, e.g., \citep{Byrne_2002,cegielski2007convergence,Qu_2005,YANG2005166,wang2011choices,Lopez_2012,shehu2021new}. Although those works aim for stepsizes that accelerate convergence or avoid matrix inverses and eigenvalue computations, we repurpose the results to obtain a certificate of nonexpansiveness of CQnet (Proposition \ref{upper_bound_ev}) with very little (computational) effort. 

\cite{Byrne_2002} presented a bound for dense and sparse matrices. Below we specialize that bound to matrices where each block is a different Toeplitz matrix generated by a convolutional kernel (the blocks on a diagonal are not the same).

\begin{proposition}[Upper bound on the largest eigenvalue of $\TA(\Theta)^\top \TA(\Theta)$]\label{upper_bound_ev}
Let $\TA(\Theta)$ be a matrix with $c_\text{out} \times c_\text{in}$ Toeplitz structured blocks, where $c_\text{out}$ and $c_\text{in}$ are the number of output and input channels respectively. Each block is parameterized by a different convolutional kernel $\theta^{i,j} \in \mathbb{R}^{w\times w}$ with $w^2$ elements. Also assume that the convolutional kernels are normalized by the Euclidean length of all filters in the same block-row of $\TA(\Theta)$, i.e., $\tilde{\theta}^{i,j} = \theta^{i,j} / \sqrt{\sum_{j=1}^{c_\text{in}} \sum_{k=1}^w \sum_{l=1}^w (\theta^{i,j}_{k,l})^2)}$. Then $\rho(\TA(\tilde{\Theta})^\top \TA(\tilde{\Theta})) \leq w^2 c_\text{out}$
\end{proposition}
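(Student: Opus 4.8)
The plan is to bound the spectral norm directly, since $\rho(\TA(\tilde{\Theta})^\top \TA(\tilde{\Theta})) = \|\TA(\tilde{\Theta})\|_2^2$, by controlling how much $\TA(\tilde{\Theta})$ can inflate an arbitrary input. First I would organize the input $\bx$ into its $c_\text{in}$ channels $x_1,\dots,x_{c_\text{in}}$ and write the $i$-th output channel as $(\TA(\tilde{\Theta})\bx)_i = \sum_{j=1}^{c_\text{in}} T^{i,j} x_j$, where $T^{i,j}$ is the Toeplitz/convolution block generated by the normalized kernel $\tilde{\theta}^{i,j}$. Then $\|\TA(\tilde{\Theta})\bx\|_2^2 = \sum_{i=1}^{c_\text{out}} \|\sum_{j=1}^{c_\text{in}} T^{i,j} x_j\|_2^2$, so the whole estimate reduces to controlling each output channel separately.

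The key single-block estimate is an operator-norm bound for a convolution in terms of the Euclidean length of its kernel. By the triangle inequality and submultiplicativity, $\|\sum_j T^{i,j} x_j\|_2 \le \sum_j \|T^{i,j}\|_2 \|x_j\|_2$. For a 2D convolution, Young's inequality gives $\|T^{i,j}\|_2 \le \|\tilde{\theta}^{i,j}\|_1$: the block is a restriction of convolution against $\tilde{\theta}^{i,j}$, whose $\ell^2\!\to\!\ell^2$ norm is at most the kernel's $\ell^1$ norm, independent of the spatial size $n$ and of the boundary convention. Cauchy--Schwarz over the $w^2$ kernel entries then yields $\|\tilde{\theta}^{i,j}\|_1 \le w \|\tilde{\theta}^{i,j}\|_F$, hence $\|T^{i,j}\|_2^2 \le w^2 \|\tilde{\theta}^{i,j}\|_F^2$.

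Next I would combine these across input channels and invoke the normalization. For each fixed output channel $i$, squaring the triangle-inequality bound and applying Cauchy--Schwarz gives $\|\sum_j T^{i,j}x_j\|_2^2 \le w^2 \big(\sum_j \|\tilde{\theta}^{i,j}\|_F^2\big)\big(\sum_j \|x_j\|_2^2\big)$. The block-row normalization is exactly the statement $\sum_{j=1}^{c_\text{in}} \|\tilde{\theta}^{i,j}\|_F^2 = 1$, so each output channel contributes at most $w^2 \|\bx\|_2^2$. Summing the $c_\text{out}$ output-channel bounds gives $\|\TA(\tilde{\Theta})\bx\|_2^2 \le c_\text{out}\, w^2 \|\bx\|_2^2$ for every $\bx$, i.e. $\rho(\TA(\tilde{\Theta})^\top \TA(\tilde{\Theta})) \le w^2 c_\text{out}$.

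The main obstacle is the single-block convolution bound: one must justify $\|T^{i,j}\|_2 \le \|\tilde{\theta}^{i,j}\|_1$ rigorously for the finite Toeplitz blocks, including how boundary/padding is handled, since this is what makes the certificate closed-form and independent of $n$; the $w$-factor and the overall $w^2 c_\text{out}$ then follow from elementary Cauchy--Schwarz. The factor $c_\text{out}$ enters only through summing the per-output-channel estimates, each of which already uses the full input norm, so I would note that this is where looseness is introduced and that it is the price paid for a structure-only, magnitude-free bound. This is precisely the specialization of Byrne's dense/sparse eigenvalue bound to the block-Toeplitz (convolutional) case.
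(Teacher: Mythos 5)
Your proof is correct, but it follows a genuinely different route from the paper's. The paper invokes Byrne's sparse-matrix eigenvalue bound \citep[Prop.~4.1]{Byrne_2002}: writing $\bv_m = \sum_n \TA_{m,n}^2$ for the row-squared-sums and $\TE$ for the 0/1 sparsity pattern of $\TA$, one has $\rho(\TA^\top\TA) \leq \max(\bv^\top \TE)$; after block-row normalization every $\bv_m = 1$, so $\bv^\top\TE$ just counts nonzeros per column, namely $w^2$ per block times $c_\text{out}$ blocks. Your argument instead bounds $\|\TA(\tilde\Theta)\bx\|_2^2$ from first principles: the per-block estimate $\|T^{i,j}\|_2 \leq \|\tilde\theta^{i,j}\|_1 \leq w\|\tilde\theta^{i,j}\|_F$ (Young's inequality plus Cauchy--Schwarz over the $w^2$ kernel entries), then Cauchy--Schwarz across input channels together with the normalization $\sum_j \|\tilde\theta^{i,j}\|_F^2 = 1$, and finally a sum over output channels — arriving at the same constant $w^2 c_\text{out}$. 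What the paper's route buys is brevity and consistency with its narrative of repurposing CQ-algorithm machinery; the counting argument also sidesteps any discussion of convolution operator norms. What your route buys is a self-contained proof that exposes exactly where slack enters (the triangle inequality over channels and the $\ell^1$-versus-$\ell^2$ step), and an explicit reason the bound is independent of the spatial dimension $n$. The one step you rightly flag — justifying $\|T^{i,j}\|_2 \leq \|\tilde\theta^{i,j}\|_1$ for the finite blocks — is easily closed for zero padding (the block is a submatrix of the bi-infinite convolution operator, and restriction cannot increase the operator norm) or circular padding (circulant eigenvalues are bounded by the kernel's $\ell^1$ norm); note that the paper's proof carries an analogous implicit assumption, since its per-row and per-column nonzero counts also depend on the boundary convention.
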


\begin{proof}
For $\TA \in \mathbb{R}^{M \times N}$, each entry of the vector of row-squared-sums $\bv \in \mathbb{R}^M$ is given by $\bv_m = \sum_{n=1}^N \TA_{m,n}^2$. Let $\TE \in \mathbb{R}^{M \times N}$ with $\TE_{m,n} = 1$ if $\TA_{m,n} \neq 0$ and $0$ otherwise. Then the eigenvalues of $\TA^\top \TA$ are bouned as $\rho(\TA^\top \TA) \leq  \operatorname{max}(\bv^\top \TE)$ \citep[Prop. 4.1]{Byrne_2002}. For a block matrix that is parameterized by convolutional kernels, $\TA(\Theta)$, each row of each block contains all elements of the corresponding convolutional kernel. Normalizing each convolutional kernel by the Euclidean length of all kernels in the same block-row via $\tilde{\theta}^{i,j} = \theta^{i,j} / \sqrt{\sum_{j=1}^{c_\text{in}} \sum_{k=1}^w \sum_{l=1}^w (\theta^{i,j}_{k,l})^2)}$, leads to $\bv_m = 1$ for all rows. Thus, $\bv^\top \TE$ counts the number of nonzero entries in each column of $\TA(\tilde{\Theta})$, where $\tilde{\Theta}$ indicates the normalized kernels. A matrix $\TA(\Theta)$ where each block is Toeplitz, has the same number of nonzeros in every column if all convolutional kernels are of the same size, as we assumed. Every column in $\TA(\Theta)$ then contains $w^2$ nonzeros in each of the $c_\text{out}$ blocks, so we obtained $\rho(\TA(\tilde{\Theta})^\top \TA(\tilde{\Theta})) \leq w^2 c_\text{out}$
\end{proof}

The above proposition provides a recipe to train CQnet with a certificate of nonexpansiveness, by normalizing the convolutional kernels according to Proposition \ref{upper_bound_ev}, at each training iteration. This normalization together with a combination of the stability condition $\alpha_k \in (0, 2 / \lambda_k ]$ and the bound $\rho(\TA(\tilde{\Theta})^\top \TA(\tilde{\Theta})) \leq w^2 c_\text{out}$ tells us to select $\alpha_k \leq 2/ (w^2 c_\text{out})$. The normalization corresponds to an $\ell_2$ norm constraint on every set of convolutional kernels as organized in the block row of the matrix representation of all convolutional kernels. The normalization is then the projection as used in a stochastic projected gradient algorithm. 

The bound and projection from Proposition \ref{upper_bound_ev} are in closed form, non-asymptotic, and computationally cheap (often $w \leq 7$ and $c_\text{out}$ is a few dozen of few hundred per layer). While there is no claim that this bound is particularly tight, the computational simplicity is appealing, compared to some other conditions, for different network designs, that require iterative algorithms like Dykstra \cite{sedghi2018the}, Douglas-Rashford, \citep{9054731}, Bjorck Orthonormalization{ \citep{pmlr-v97-anil19a}, or approaches that require training \citep{pmlr-v70-cisse17a,NEURIPS2018_48584348} to achieve Lipschitz targets.

\subsection{Illustrative example}\label{sect:IllustrativeExample}
To illustrate the intuitive and visually appealing operation of CQnet, consider a classification example of 1D data embedded in 2D (Figure \ref{fig:data_results_1d}). We classify by adding to the network's last layer, $\operatorname{output} = \TW \bx_f$, where $\TW$ is a learnable classifier matrix $\TW \in \mathbb{R}^{n_\text{class} \times n_\text{channels}}$ and $\bx_f$ is the final state of the network. For simplicity, there is no constraint set $C$, all $Q_k$ are half spaces as in the case of the ReLU activation, and the network has $20$ layers with each $\alpha_k = 0.2$.

A trained neural network classifies the data (a point cloud) by transforming the data points into an organization that is linearly separable. CQnet achieves this transformation of the data by moving each point toward the learned constraint sets. Figure \ref{fig:data_results_1d} shows the data, network output, and final classification. 
\begin{figure}[h]
\begin{center}
   \includegraphics[width=0.99\columnwidth]{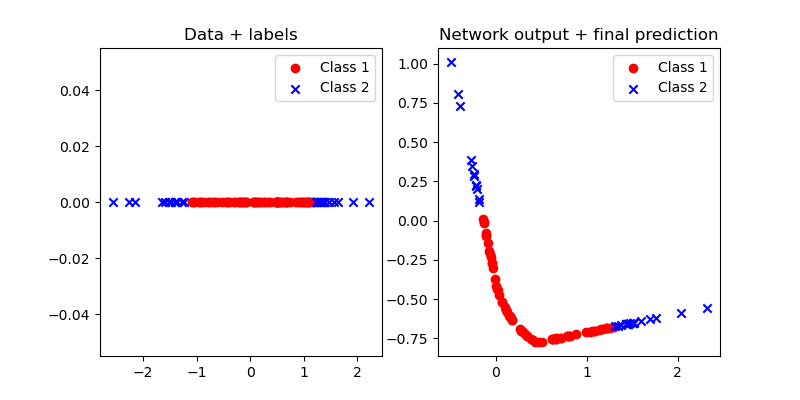}
   \caption{The 1D data are normally distributed numbers and assigned two classes. The network uses one additional dimension for this task. `Bending' the data at the right spot is sufficient to achieve linear separability into the correct classifications.}
   \label{fig:data_results_1d}
   \end{center}
\end{figure}
We are more interested in interpreting how the network arrived at the linearly separable representation. Simply plotting the trajectories of the data (Figure \ref{fig:trajectories_1D_example}) provides a high-level overview, but we obtain more insight when we plot the trajectories along with the evolving constraint set and corresponding distance function that CQnet minimizes at every layer, see Figures \ref{fig:data_trajectories_3D} and \ref{fig:sets_trajectories_1D_example}. For this example, we see that the trajectories generally do not `catch up' to the constraint sets. In a later
section, we show how to enforce such behavior if desired. The figures show a convex-geometric description of the origins of the trajectories.
\begin{figure}[!htb]
 	\centering
 	\begin{subfigure}[b]{0.39\textwidth}
 		\includegraphics[width=\textwidth]{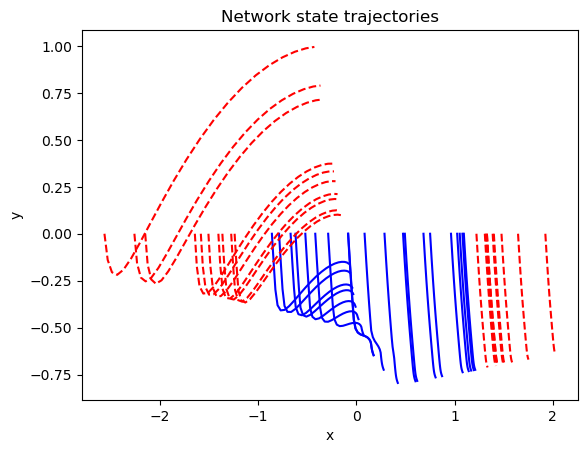}
 		\caption{}
 		\label{fig:Figure2a}
 	\end{subfigure}
 	\begin{subfigure}[b]{0.39\textwidth}
 		\includegraphics[width=\textwidth]{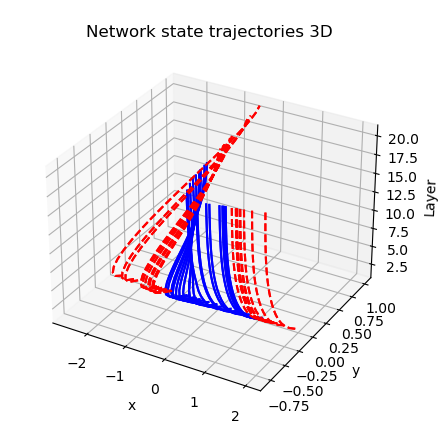}
 		\caption{}
 		\label{fig:Figure2b}
 	\end{subfigure}
 	\caption{The trajectories of the input data as viewed in a feature space and layer-feature space representation.}
\label{fig:trajectories_1D_example}
 \end{figure}

\begin{figure}[htb]
\begin{center}
   \includegraphics[width=0.8\columnwidth]{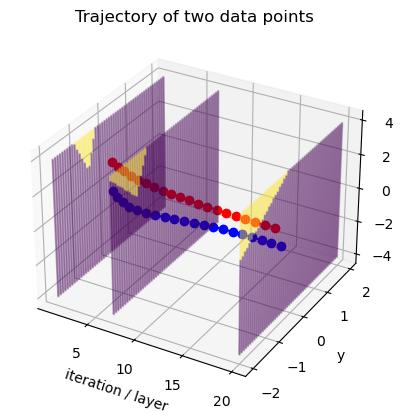}
   \caption{Trajectories for the trained CQnet corresponding to Figures \ref{fig:data_results_1d} and  \ref{fig:trajectories_1D_example}, for one data point from two different classes. Constraint sets for three layers are also displayed.}
   \label{fig:data_trajectories_3D}
   \end{center}
\end{figure}

\begin{figure*}[htb]
\begin{center}
   \includegraphics[width=0.99\textwidth]{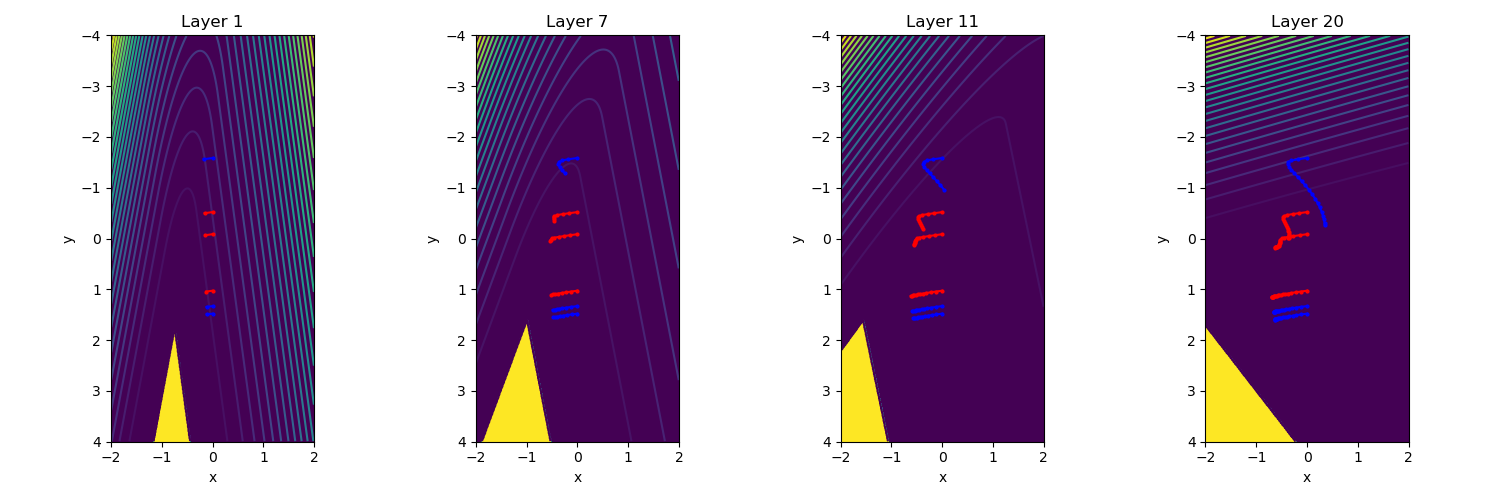}
   \caption{The constraint sets and corresponding squared-distance functions for a few layers of CQnet, and trajectories of a few input data.}
   \label{fig:sets_trajectories_1D_example}
   \end{center}
\end{figure*}

\subsection{Training CQnet}
Training CQnet is no different than many other neural networks. Given labels $\by$ and a loss function $l(\bx,\by)$, training amounts to minimizing the loss between labels and network output
\begin{align}\label{CQnet_training}
\min_{\{\TA_k\}, \TW } \: &l(\TW \bx_f,\by) \:\: \text{s.t.} \\
&P_{C_k} \big( \bx_k - \alpha_k  \TA_k^\top (\Id - P_{Q_k}) \TA_k \bx_k \big) \nonumber \\
&\text{for} \: k=1,2,\cdots,f-1 \quad \text{and} \quad \bx_1 = \bd. \nonumber
\end{align}
The matrix $\TW$ is a linear classifier. Practical minimization can proceed using automatic differentiation. It is implicit that the projection operators have a derivative defined almost everywhere. For instance, the derivative of the projector for bound constraints/ReLU at zero is set to $0$, as in standard deep-learning software.

\subsection{Including a bias term}
Many neural networks standardly incorporate a bias $\bb$ term via an affine transformation $\TA \bx + \bb$. Adding a bias to CQnet is possible without even adjusting the form or notation by using the equivalence $\TA \bx + \bb \leftrightarrow \begin{bmatrix}\TA & \bb \end{bmatrix} \begin{bmatrix}\bx \\ 1\end{bmatrix}$
and substituting it into \eqref{CQnet} to obtain
\begin{align}\label{CQnet_bias1}
&\tilde{\bx}_{k+1}  = P_{\bx_N  = 1}\big( \tilde{\bx}_k - \alpha \tilde{\TA}_k^\top ( \Id  - P_{Q_k} )\tilde{\TA}_k  \tilde{\bx}_k \big)\\ 
&\quad \text{for} \: k=1,2,\cdots,f \quad \text{and} \quad \tilde{\bx}_1 = \begin{bmatrix} \bd \\ 1 \end{bmatrix} \nonumber
\end{align}
where $\tilde{\TA} =  \begin{bmatrix} \TA & \bb \end{bmatrix}$ and $\tilde{\bx} = \begin{bmatrix} \bx \\ 1 \end{bmatrix}$, and the projector $P_{{\bx_N}  = 1}$ ensures the augmented last entry of $\bx$ remains equal to one. 

\subsection{Optional energy conservation/normalization and state shifts}\label{sect:someconstraints}
Two common building blocks for neural networks are normalizations of the state and energy conservation. The latter may arise from physics and control constraints. The former often includes shifting the state back to a mean of zero at each network layer. 

The two above procedures also fit in CQnet via the projection $P_C$.\\
\textbf{Energy conservation} translates to the annulus constraint $\{ \bx \: | \: \sigma_1 \leq \| \bx \|_2 \leq \sigma_2 \}$, where $\sigma_1$ and $\sigma_2$ may be equal to the energy of the input data $\| \bd \|_2$ or somewhat smaller/larger for approximate energy conservation. The projection is known in closed form, but because the set is not convex for $\sigma_1 >0$, the projector is not nonexpansive, and stability is not guaranteed. If used for normalization only, ($\sigma_1 = 0$, $\sigma_2>0$) the set remains convex.

\textbf{Zero-mean states} $\bx_k$ arise by shifting the mean to zero at each network layer. This shift is equivalent to projecting onto the orthogonal complement of the subspace spanned by the vector of all ones $\be = \begin{bmatrix} 1 & 1 & \cdots 1 \end{bmatrix}^\top$. Some basic linear algebra shows that this implies $C = \{ \bx \: | \: \bx \in \operatorname{col}(\be)^\perp \} \leftrightarrow   \{ \bx \: | \: \bx \in \operatorname{null}(\be^\top) \}$, so that the projection onto the set of vectors with zero mean is given by $P_C(\bx) = (\TI - P_{\operatorname{col}(\be)}) \bx  =  (\TI - \be (\be^\top \be)^{-1} \be^\top) \bx \leftrightarrow  P_C(\bx) = \bx - \be \frac{1}{n} \sum_{i=1}^n \bx_i$.

The examples section later on shows other uses for $C$ related to optimal control.

\subsection{Connection to networks as sequences of optimization problems}
The interpretation of CQnet presented in the previous section helps us understand how the states propagate through the network. While the states in each layer track a changing constraint set in general, we can be more precise. For instance, by limiting the change in the matrices $\TA_k$ layer-by-layer. This restriction leads to a more gradually changing constraint set and allows the network states to `catch up' to the constraint sets. A natural way to achieve this is to add regularization to the loss function that promotes slow variation between subsequent matrices $\TA_k$ and $\TA_{k+1}$. Such regularization can also be found in training approaches for networks motivated by differential equations \citep{RuthottoHaber2018}. In our case, we emphasize that the smooth variation of the network weights is not a requirement for stability as shown in Theorem \ref{CQnet_stab}.

The network training (for a single example) with regularization applied to the weights of all $f$ layers amounts to minimizing 
\begin{align}\label{CQnet_training_reg}
\min_{\{\TA_k\} } \:\: &l(\bx_{f},\by) + \frac{\gamma}{2} \sum_{i=1}^{f-1} \| \TA_{i+1} - \TA_{i} \|_F^2 \:\: \text{s.t.}\\
&\bx_{k+1}  =P_{C_k} \big( \bx_k - \alpha_k  \TA_k^\top (\Id - P_{Q_k}) \TA_k \bx_k \big) \nonumber \\
&\text{for} \: k=1,2,\cdots,f-1, \text{ and } \bx_1 = \bd \nonumber,
\end{align}
with penalty parameter $\gamma >0$. Taking the slowly varying weights $\TA_k$ to the extreme leads to constant $\TA_k$ (weight sharing), such that the corresponding part of the network solves (approximately) the SFP $\TA \bx \in Q$ while $\bx \in C$ (assuming all $C_k$ and $Q_k$ are fixed). Then, the full network consists of one or several intervals with constant parameters and is equivalent to solving a sequence of SFPs. Exploring this mode of operation of CQnet is left for future work, but it sets the stage to draw connections to prior work that recognizes some neural networks are equivalent to sequences of sparse-coding problems \citep{JMLR:v18:16-505,8664165}, and methods that seek fixed points defined through non-explicit models like deep equilibrium models \citep{NEURIPS2019_01386bd6} or fixed point networks \citep{heaton2021feasibility}, differentiable optimization layers \citep{pmlr-v70-amos17a,NEURIPS2019_9ce3c52f} and other implicitly defined networks \citep{doi:10.1137/20M1358517}.

\section{Numerical examples}
The implementation and all examples are available at http://... Appendix \ref{AppA} contains an additional example.

\subsection{Multi-Agent control}
CQnet naturally supports states that are always feasible w.r.t. a certain learned/not-learned constraint set while simultaneously approaching other constraint sets. The feasibility for these trajectory constraints holds by construction for training and inference. CQnet can thus solve some (optimal) control problems while simultaneously highlighting its geometrical interpretation. 

Figure \ref{fig:OptControlFig} introduces a discrete time and continuous space multi-agent path-finding problem, a version of which appeared in \cite{9786046}. The task is simultaneously moving two agents from two possible starting areas to two targets.

The main challenges are moving through the corridor by
avoiding the obstacles and staying at a safe distance from
the obstacle while also not colliding with each other by preserving a safe inter-agent distance. The state vector $\bx \in \mathbb{R}^4$ contains the coordinates of the two agents ($\bx_a \in \mathbb{R}^2$ and $\bx_b \in \mathbb{R}^2$). It is clear that any feasible trajectory should satisfy at least the state constraints
\begin{align}\label{control_costraints}
&C \equiv \{ \bx \: | \: \| \bx_a - \bx_b \|_2 \geq 2.0 \}, \: \bx = \begin{bmatrix} \bx_a \\ \bx_b \end{bmatrix} \\
&Q_1 \equiv \{ \bx \: | \: \| \bx - \bx_\text{target} \|_2 \leq \epsilon \} \\
&Q_2 \equiv \{ \bx \: | \: \| \bx - \bx_\text{object,i} \|_2 \geq 1.0 \} \: \forall \:  \bx_\text{object,i}
\end{align}
where $\bx_\text{object,i}$ is one gridpoint of the object to avoid, and $\epsilon$ a small target tolerance. The nonconvex set $C$ describes anti-collision between agents, $Q_1$ is a target constraint, and $Q_2$ is a nonconvex description of object avoidance. 

Now consider an approach that combines ideas from neural ODEs, control, the CQnet, and the CQ algorithm. Below we state an optimal control approach to find trajectories for one or a small set of non-adjustable initial conditions. It uses as the dynamics a mix of CQnet and the CQ algorithm for the above constraints,
\begin{align}\label{CQnet_optcon}
&\min_{\{\TA_t\}} \:\: \beta_1 \sum_{t=1}^{t=T} J(\bx,t) + \beta_2 \| \bx_T - \bx_\text{target} \|_2^2 \:\: \text{s.t.}\\
&\bx_{t+1}  = P_C \big( \bx_t - \sum_{i=1}^2 \alpha_i ( \Id - P_{Q_i}) \bx_t   \nonumber \\
&+ \alpha_3 \TA_t^\top (\Id - P_{Q_3}) \TA_t \bx_t \big) \:\: \text{for} \: t \in \{1,T-1\} \nonumber \\
&\bx_1 = \bx_\text{start} \nonumber
\end{align}
where $P_{Q_3}$ is equivalent to the ReLU as in earlier examples. CQnet provides the dynamics with learnable weights $\TA_t \in \mathbb{R}^{6 \times 4}$ for every layer, as well as contributions for the regular CQ iteration applied to the non-learnable constraints $C$, $Q_1$, and $Q_2$ (see \cite{Censor_2005} for details related to the extension of the SFP to multiple constraint sets). The running costs for each time ($t=1,\cdots,T$) are $J(\bx,t) = \|  \bx_{t+1} - \bx_t \|$. The terminal cost appears as the constraint $Q_1$ and as a loss term. 

CQnet creates states $\bx_t$ that are feasible with respect to the anti-collision constraints at every 'time-step' $t$, while simultaneously moving toward the target and away from the objects (if within the halo). Figure \ref{fig:control_no_network} illustrates that simply running the CQ algorithm (i.e., $\beta_1 = 0$ and $\alpha_3 = 0$) on a nonconvex problem can lead to agents that get stuck.

Learning via \eqref{CQnet_optcon} does not use example trajectories or controls. The results with the neural network and running costs are shown in Figure \ref{fig:control_w_network}. Figure \ref{fig:control_multiple_traj} displays multiple trajectories that show empirically that the agents can make it to the targets from any two starting positions from the initial training conditions. The experiment used a final time of $T=100$. 

For the nonconvex problem, it is also expected that some hyperparameter tuning is required; we set $\beta_1 = 0.15$, $\beta_2 = 1$, $\alpha_1  = 0.1$, $\alpha_2 = 0.5 $, and $\alpha_3  = 0.05$.

\begin{figure*}[!htb]
 	\centering
 	\begin{subfigure}[b]{0.24\textwidth}
 		\includegraphics[width=\textwidth]{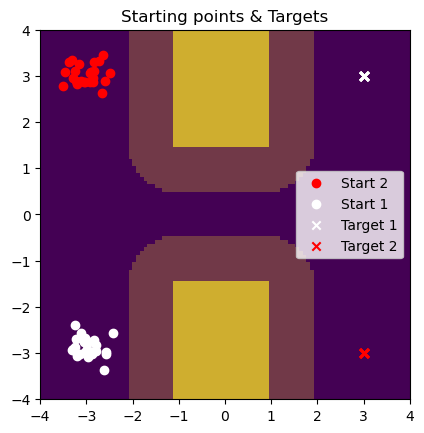}
 		\caption{}
 		\label{fig:exp_corr}
 	\end{subfigure}
 	\begin{subfigure}[b]{0.24\textwidth}
 		\includegraphics[width=\textwidth]{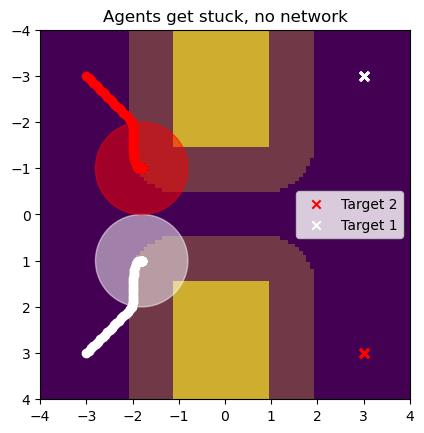}
 		\caption{}
 		\label{fig:control_no_network}
 	\end{subfigure}
\begin{subfigure}[b]{0.24\textwidth}
 	\includegraphics[width=\textwidth]{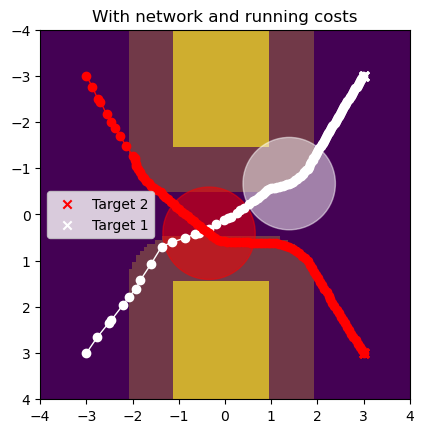}
 		\caption{}
 		\label{fig:control_w_network}
 	\end{subfigure}
 	\begin{subfigure}[b]{0.24\textwidth}
 		\includegraphics[width=\textwidth]{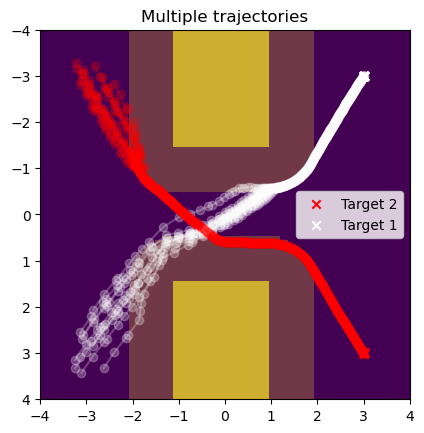}
 		\caption{}
 		\label{fig:control_multiple_traj}
 	\end{subfigure}
 	\caption{(a) Experimental setting with various possible starting locations, two targets, object (solid yellow), object safety halo (shaded). Agents need to stay a distance of $2.0$ apart, which makes it impossible to simultaneously pass through the corridor. (b) Results of running the CQ algorithm (no network or learned parameters) with anti-collision constraints, target ($\ell_2$-ball) constraint, and object avoidance constraint (annulus constraint) as in \eqref{control_costraints}. The agents can get stuck. (c) Results from CQnet with the constraints as in \eqref{control_costraints} and a time-dependent learned constraint that acts as the controller and dynamics. (d) Trajectories are shown for multiple initial conditions (still two agents at a time). All plots show trajectories at final time $T=100$; radius$=1$ circles in (b) and (c) are shown for $T=60$.}
 	\label{fig:OptControlFig}
 \end{figure*}

\subsection{Fashion MNIST}
While not the primary target application of CQnet, this example verifies that CQnet performs similarly to other networks on a simple general-purpose task: fashion-MNIST \citep{xiao2017} classification. We compare against the Resnet \citep{he2016deep}, characterized by the layer $\bx_{k+1} = \bx_k - \alpha \operatorname{ReLU}(\TA_k \bx_k + \bb_k)$; and a variant with the symmetric layer \citep{RuthottoHaber2018} $\bx_{k+1} = \bx_k - \alpha \TA_k^\top \operatorname{ReLU}(\TA_k \bx_k + \bb_k)$ that has more favorable stability properties. We compare these three networks because they have a superficially similar structure. Each network has seven layers with convolutional weight tensors, $\TA_k$, of size $3 \times 3 \times 36 \times 36$ ($36$ channels) in all cases. There is a learnable opening/embedding layer of size $3 \times 3 \times 1 \times 36$, and a final classifier matrix that maps to the number of classes. Average poolings after layers two, four, and six reduce the feature size by a factor of two in each dimension. We look at a bare-bones comparison without data augmentation, no weight regularization, and training uses a fixed learning rate for stochastic gradient descent (batch size of one). The achieved test accuracy was CQnet: $90.91 \%$, ResNet: $90.98 \%$, and ResNet with symmetric layer: $90.89 \%$, averaged over five training runs with different random initializations. So while it is encouraging that CQnet performs similarly, the primary reason for introducing CQnet is to offer a geometrical interpretation and employ it for satisfying trajectory constraints with provable stability properties. 

\section{Conclusions}
This work introduces CQnet: a new neural network design with origins in the CQ algorithm for solving convex split-feasibility problems. The new network design provides novel set-based geometric insights into data points' trajectories when propagating through the network. The network also allows for incorporating multiple constraints on the trajectories that are satisfied at every layer while training and during inference. CQnet also gradually approaches another type of learnable constraint set via descent on its distance function. Because CQnet comprises projection operators and gradients of point-to-set distance functions, it naturally includes layer/state normalization, energy conservation, and certain non-linear activation functions. We provide a stability proof constructed from nonexpansive operators without simplifying assumptions on the variation of network weights per layer, the presence of the activation function, or layer normalization. Examples illustrate how CQnet provides a new convex-geometrical perspective on the internal operation of a neural network and how its constraint-handling capabilities connect it to problems with physical constraints and optimal control.

\normalsize
\bibliography{biblio}

\newpage
\appendix
\onecolumn
\section{Illustrative 2D example.}\label{AppA}
Similar to the illustrative example of 1D data embedded in 2D as in section \ref{sect:IllustrativeExample}, we now show an example of 2D data embedded in 3D (Figure \ref{fig:data_2d}) and the effect of per-sample energy conservation. Training CQ net for the $100$ data points $\bd_i \in \mathbb{R}^3$ (where the third augmented dimension is always $0$) proceeds by minimizing
\begin{align}\label{CQnet_training_E_conservation}
\min_{\{\TA_k\}, \TW } \: &l(\TW \bx_f,\by) \:\: \text{s.t.} \\
&P_{C_k} \big( \bx_k - \alpha_k  \TA_k^\top (\Id - P_{Q_k}) \TA_k \bx_k \big) \nonumber \\
&\text{for} \: k=1,2,\cdots,f-1 \quad \text{and} \quad \bx_1 = \bd. \nonumber
\end{align}
For each data point $\bd_i$ we select all $P_{C_k}$ as the annulus $C_k = \{ \bx \: | \: 0.9\|\bd_i\|_2 \leq \| \bx \|_2 \leq 1.1\|\bd_i\|_2 \}$ to guarantee approximate energy consevation. All $P_{Q_k}$ are the halfspace (ReLU) constraint, all $\TA_k \in \mathbb{R}^{3 \times 3}$, $\TW \in \mathbb{R}^{1 \times 3}$, and the loss function $l(\TW \bx_f,\by)$ is measured as binary cross-entropy.

\begin{figure}[h]
\begin{center}
   \includegraphics[width=0.45\columnwidth]{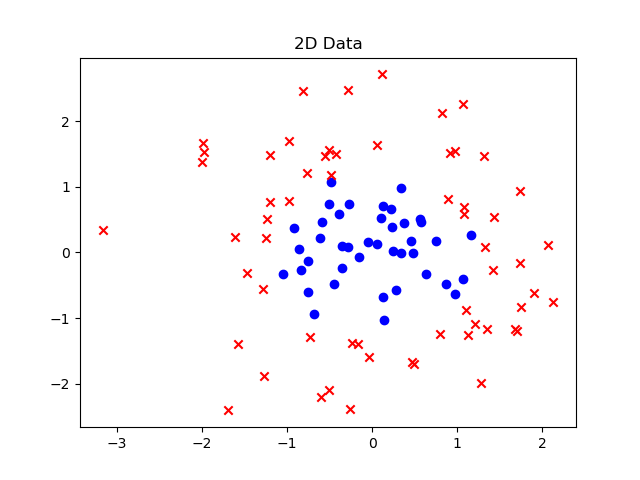}
   \caption{The 2D training data are normally distributed numbers and assigned two classes. The network uses one additional dimension for this task.}
   \label{fig:data_2d}
   \end{center}
\end{figure}

The predictions for all training and validation data is shown in Figures \ref{fig:data2dclassified} and \ref{fig:data2dclassifiedE}, for CQnet with and without energy conservation. Figures \ref{fig:data2dtrajectory} and \ref{fig:data2dtrajectoryE} show the trajectories. The learned constraint sets are not shown because they are layer dependent and are a 4D quantity for this problem.

\begin{figure}[!htb]
 	\centering
 	\begin{subfigure}[b]{0.45\textwidth}
 		\includegraphics[width=\textwidth]{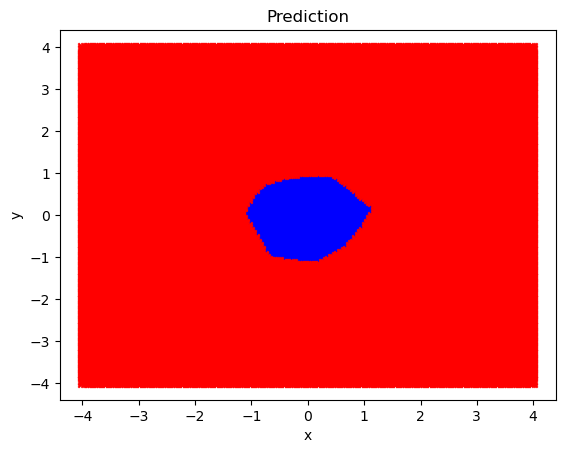}
 		\caption{}
 		\label{fig:data2dclassified}
 	\end{subfigure}
 	\begin{subfigure}[b]{0.45\textwidth}
 		\includegraphics[width=\textwidth]{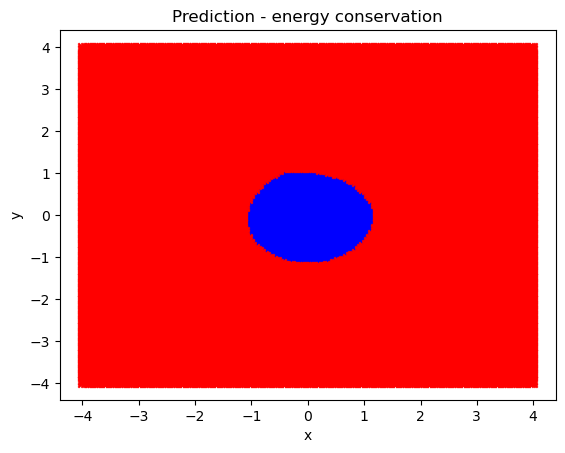}
 		\caption{}
 		\label{fig:data2dclassifiedE}
 	\end{subfigure}
	\begin{subfigure}[b]{0.45\textwidth}
 		\includegraphics[width=\textwidth]{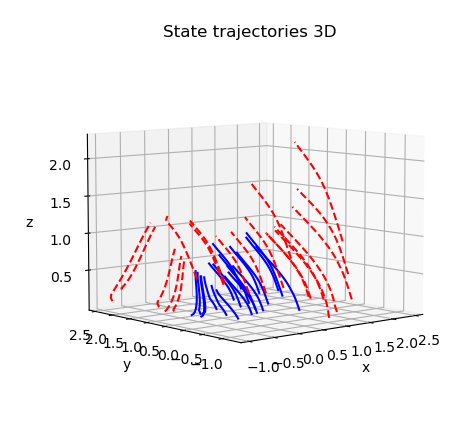}
 		\caption{}
 		\label{fig:data2dtrajectory}
 	\end{subfigure}
 	\begin{subfigure}[b]{0.45\textwidth}
 		\includegraphics[width=\textwidth]{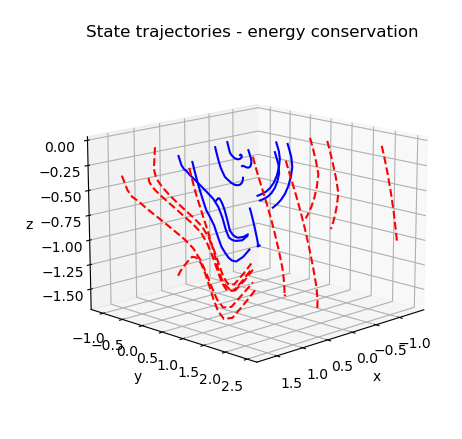}
 		\caption{}
 		\label{fig:data2dtrajectoryE}
 	\end{subfigure}
 	\caption{(a) \& (b) Prediction on all training (Figure \ref{fig:data_2d}) and validation data. The additional energy constraints have a regularizing effect on this problem and increase the prediction accuracy. (c) \& (d) Some trajectories for x-y-z at every layer in the CQnet, after training, for the data in Figure \ref{fig:data_2d}. While somewhat difficult to observe, the trajectories from CQnet with energy conservation are within the annulus.}
\label{fig:all2D_data_classified_trajectories}
 \end{figure}

\end{document}